\definecolor{darkblue}{RGB}{0,0,128}
\definecolor{darkred}{RGB}{128,0,0}
\definecolor{darkgreen}{RGB}{0,128,0}
\renewcommand{\a}{\mathbf{a}}
\newcommand{\x}{\mathbf{x}}
\newcommand{\y}{\mathbf{y}}
\newcommand{\B}{\mathcal B}
\newcommand{\C}{\mathcal C}
\renewcommand{\O}{\mathcal O}
\newcommand{\R}{\mathbb R}
\renewcommand{\S}{\mathcal U}
\newcommand{\T}{\mathcal S}
\newcommand{\X}{\mathcal X}
\newcommand{\Y}{\mathcal Y}
\newcommand{\I}{\mathcal I}
\DeclareMathOperator*{\argmin}{\arg\!\min}
\DeclareMathOperator*{\argmax}{\arg\!\max}
\newcommand{\N}{\mathbb N}
\newcommand{\F}{\mathcal F}
\newcommand{\G}{\mathcal G}
\newcommand{\M}{\mathcal M}
\newcommand{\as}{\text{~a.s.}}
\newcommand{\dd}{\delta}
\def\polylog{\operatorname{polylog}}
\def\algo{\operatorname{Alg}}
\newtheorem{theorem}{Theorem} 
\newtheorem{definition}{Definition} 
\newtheorem{lemma}{Lemma} 
\newtheorem{proposition}{Proposition} 
\newtheorem{remark}{Remark} 
\newtheorem*{remark*}{Remark}
\newtheorem*{theorem*}{ \cite[Theorem~1]{khaleghi:12mce}}
\newtheorem*{lemma*}{\cite[Lemma~2]{khaleghi14}}
\title{Clustering piecewise stationary processes}
\author{Azadeh Khaleghi \\ \href{mailto:a.khaleghi@lancaster.ac.uk}{a.khaleghi@lancaster.ac.uk}\and Daniil Ryabko\\ \href{mailto:daniil@ryabko.net}{daniil@ryabko.net}}
\date{}
\begin{document}

\maketitle
\begin{abstract}
The problem of time-series clustering is considered in the case where each data-point is a sample generated by a piecewise stationary ergodic process. Stationary processes are perhaps the most general class of processes considered in non-parametric statistics and allow for arbitrary long-range dependence between variables. Piecewise stationary processes studied here for the first time in the context of clustering, relax the last remaining assumption in this model: stationarity. A natural formulation is proposed for this problem and a notion of consistency is introduced which requires the samples to be placed in the same cluster if and only if the piecewise stationary distributions that generate them have the same set of stationary distributions. Simple, computationally efficient algorithms are proposed and are shown to be consistent without any additional assumptions beyond piecewise stationarity.
\end{abstract}
\section{Introduction}
Clustering, in an informal sense, involves breaking a dataset into possibly disjoint subsets called clusters where the elements within the same cluster are somehow more similar to each other than to those in other clusters. This task, which is often a first step in data-analysis, is meant to help with the initial steps to making sense of the data that typically have complex structures and represent some unknown underlying phenomena to be inferred. Given the nature of the problem, it is desirable to make as little assumptions as possible about the underlying mechanisms generating the data. 
Moreover, the minimal  assumptions made should ideally be qualitative and easily verifiable from an application's perspective. 

In this paper we consider a subclass of the clustering problem where each data-point is a time series. Indeed, such sequential data are ubiquitous in modern applications involving, for example, user-behaviour, social networks,  as well as financial or biological data, where the observations are sequential by nature, and/or are collected over time. The common features in these real-world datasets are the absence of precise models as well as an abundance of data. 
From a mathematical perspective, a learning problem involving sequential data can be formulated as follows.
Given sequences of the form $Y_1,...,Y_n$ the aim is to make inference about the stochastic mechanisms that generate them. This task is typically done under the assumption that $Y_i$ are independently and identically distributed (i.i.d), or that their distribution belongs to a specific model class. 
However, such assumptions clearly undermine the possibly complex nature of data which may possess long-range 
dependencies. 

To address this gap, one approach is to assume that the process distributions are stationary without requiring any conditions to hold on their memory. That is, $Y_i$ need not be independent nor do they need to possess any finite-memory or mixing properties. 
This allows for arbitrary long-range dependencies between the observations. Moreover, thanks to Birkhoff's ergodic theorem, under this assumption alone, the frequency of each event almost surely converges to its underlying probability, even though there is no guarantee on the speed of this convergence. Intuitively, non-zero probability events that happen once, happen infinitely often, and their frequencies are meaningful. This characteristic is already enough to make some statistical inference.
In particular, in \cite{Ryabko:10clust} it was suggested to cluster time-series samples based on the distribution that generates them, putting together those and only those samples whose distribution is the same. Making use of the fact that in this setting the target clustering has the so-called strict separation property (e.g., \cite{Balcan:08}), it was shown that asymptotically consistent clustering is achievable under the assumption of stationarity alone; see also \cite{khaleghi2016consistent} and \cite{Ryabko:19c}.

While a rather weak assumption, stationarity often breaks in applications: events can happen from which there is no coming back. Simple real-world examples concerning user-behaviour may include such events as changing a job, changing a mobile phone, or having a child. Events that were typical in the past stop happening and events of a new kind start occurring. In these cases, the time index itself bares information. It may still make sense to measure frequencies of events in-between the changes, even though we may not be able to tell from the data alone, when the change happened. Moreover, the changes themselves may be described as abrupt, or, more generally, such that the transition stages are negligible at least asymptotically. 
Thus, the model that we suggest for the data here is based on piecewise stationary processes. Each time series that is given can be broken into segments, such that within each segment the distribution of the data is stationary. The segments' boundaries are arbitrary unknown, as are their corresponding distributions. Moreover, no assumptions beyond stationarity are made on the distributions of the stationary segments. In particular, the data within each segment are not assumed to be independent or possess any finite-memory or mixing properties.

We propose to identify a piecewise stationary distribution with a ``bag of distributions'' corresponding to the stationary segments. Thus, the locations where the change of distribution occurs, as well as the order of the stationary distributions, are disregarded. We consider a pair of piecewise stationary distributions equivalent if the set of stationary distributions of their segments coincide.
Thus, {\em the clustering objective is to  put into the same cluster those and only those time-series samples whose distributions  are equivalent in this sense.} An algorithm to achieve this objective is said to be consistent. 
Our {\em main result} is an algorithm that, as we show, is asymptotically consistent under the only assumption that each time series is stationary ergodic. The algorithm needs to know the correct number of clusters, which, as is already the case for stationary time series, is a necessary assumption in this setting. In addition, the length of each segment is required to be linear in the sample size, and a lower-bound on the length of the segments is assumed to be provided. These conditions formalize the intuition that transition periods are negligible with respect to the segments, and each segment is sufficiently long to allow for making inference. The consistency result is obtained using a novel distance between equivalence classes of piecewise stationary distributions. This distance is based on minimax distances between the distributions that generate the segments. The main technical argument establishes that this distance can be estimated consistently based on samples. 
The proposed algorithm extends those on clustering stationary time series \cite{Ryabko:10clust,khaleghi2016consistent}. It uses as sub-routines the algorithms for change-point analysis developed in \cite{khaleghi:12mce,khaleghi14}. Note, however, that the definition of the clustering objective  is more permissive here: the distributions of the time series within each cluster may be different, only the set of distributions of the segments has to be the same. 

The results provided in this paper are theoretical, and their main appeal is in their generality. While the proposed clustering algorithm is shown to be computationally feasible, its detailed experimental investigation is left as future work.
It should be noted that the literature on the related topic of change-point analysis is vast, but is mostly concerned with independent or mixing data, and also restricts the nature of the changes to single-dimensional marginals. The exceptions are the works cited above and references therein. 
To the best of our knowledge, there are no prior attempts to consider piecewise stationary distributions in the context of clustering. 
Outside of change-point analysis, related problems have been considered in the context of prediction, albeit with much more restrictive assumptions on the distributions between the changes. Thus, \cite{Willems:96} considers the prediction problem with arbitrary change points i.i.d. segments. See also~\cite{Gyorgy:12} and references therein. 

\section{Preliminaries}
We sometimes use the abbreviation $u..v$ for  
 $\{u,\dots,v\},~u\leq v \in \N$, and let
$|\cdot|$ denote the cardinality of a set or the number of elements in a sequence; distinction will be apparent from the context. 
Let $(\X, \B_{\X})$ be a measurable space; in this work we consider $\X=\R$, leaving extensions to more general spaces as future work. 
Denote by 
$\Delta_{u,v}:=\{[\frac{i_1}{2^{v}}, \frac{i_1+1}{2^{v}})\times \dots \times [\frac{i_u}{2^{v}}, \frac{i_u+1}{2^{v}}): i_j \in 0..2^{v}-1,~j \in 1..u\}$
the set of dyadic cubes in $\X^u,~u\in \N$ 
of side-length $2^{-v}$, and let  $\B^{(u)}:=\sigma(\{\Delta_{u,v},~v \in \N\})$  be the Borel subsets of $\X^u,~u \in \N$. Let $\X^{\N}$ be the set of all $\X$-valued infinite sequences equipped with the Borel $\sigma$-algebra
$\B:=\sigma(\{B\times \X^{\N}:B \in \Delta_{u,v},~u, v \in \N\})$. By a stochastic process we mean a probability measure on $(\X^{\N},\B)$. 
Take
a sequence of random variables $\x:=\langle X_t\rangle_{t \in \N}$ with joint distribution $\mu$ 
where for every $t \in \N$, $X_t:\X^\N \to \X$ is the coordinate projection of $\a:=\langle a_t \rangle_{t \in \N} \in\X^\N$ onto its $t^{\text{th}}$ element, i.e. $X_t(\a)=a_t$.
For each $n \in \N$ and $B \in \B^{(u)},~u \in \N$ we let 
$\mu_n(\x,B): \X^\N  \rightarrow [0,1],~n \in \N$ denote the empirical measure of $B$ where,
\begin{equation}\label{eq:mun}
\mu_n(\x,B) = \begin{cases}
\frac{1}{n-u+1}\sum_{i=1}^{n-u+1} \mathbb{I}\{X_{i..i+u}\in B\} &~n \geq u\\
0 & \text{otherwise}
\end{cases}
\end{equation}
and $\mathbb{I}$ is the indicator function.
We often call $X_{1..n}$ a sample of length $n \in \N$ generated by a stochastic process $\mu$ with corresponding sequence of random variables $\langle X_t \rangle_{t \in\N}$. 
\begin{definition}[Stationary Ergodic Processes] 
A process $\mu$ is stationary if
\begin{equation*}
\mu(X_{1..u}\in B)= \mu(X_{{1+j}..{u+j}} \in B)
\end{equation*} for every $B \in \B^{(u)},~u \in \N$ and every $j \in \N$.
A stationary process $\mu$ with corresponding sequence of random variables $\x=\langle X_t \rangle_{t \in \N}$ is (stationary) ergodic if with for every $u \in \N$ and $B \in \B^{(u)}$ it holds that 
\begin{equation*}
    \lim_{n \rightarrow \infty} \mu_n(\x,B)=\mu(B),~\mu-\as.
\end{equation*}
\end{definition} 
\begin{remark}
 The above definition can be shown to be equivalent to the standard definition involving triviality of shift-invariant measurable sets; see, e.g., \cite{Gray:88}. \end{remark} 

\paragraph{Joint process distributions.} We may require to simultaneously consider multiple samples $X_{1..n},~n \in \N$ generated by different, possibly dependent stationary ergodic processes. To allow for this, we first define a distribution over a matrix of random variables, each row of which shall correspond to one of the
samples. Next, we obtain each process as the {\em marginal} distribution of the corresponding row of the matrix.
More specifically, we have the following formulation. 
For a fixed $m \in \N$, let $\rho$ 
be a measure on the space $(\X^{m \times \N}, \B^{\otimes m})$ where,
\begin{equation}\label{eq:botimes}
\B^{\otimes m}:= \sigma(\{B_1 \times \dots \times B_{m}: B_i \in \B,~i \in 1..m\}).
\end{equation}
Define the matrix of $\X$-valued random variables
\begin{equation}\label{eq:bfX}
    {\bf X} := 
    \begin{pmatrix}
    X_{1,1}&X_{1,2}&X_{1,3}&\dots\\
    \vdots & \dots &\ddots & \ddots\\
     X_{m,1}&X_{m,2}&\dots&\dots \\
    \end{pmatrix}
\end{equation}
where $X_{i,j}:\X^{\N \times \N} \rightarrow \X,~i,j \in \N$ are jointly distributed according to ${\rho}$, so that for $B \in \B^{\otimes m}$ we have $\Pr({\bf X} \in B) = \rho(B)$. For each $i \in 1..m$, let $\x_i:=\langle  X_{i,j} \rangle_{j \in \N}$ and define the projection map 
$\pi_i \mapsto \x_i$. The marginal distribution $\mu_i$ of $\x_i$ is then defined as the distribution induced by $\rho$ over the $i^\text{th}$ row, i.e. 
$\mu_i:=\rho \circ \pi_i^{-1}$.
We denote by 
\begin{equation}\label{mrho}
\mathcal M(\rho):=\{\mu_i: ~i \in 1..m\}
\end{equation}
the set of marginal process distributions of $\rho$. 
\begin{definition}[Distributional Distance]\label{defn:dd}
A {\em distributional distance} between a pair of processes $\mu, \mu'$ is defined as 
\begin{equation*}
d(\mu,\mu'):=\sum_{u,v \in \N} w_u w_v \sum_{B\in \Delta_{u,v}} |\mu(B)-\mu'(B)|
\end{equation*}
where $w_j=1/j(j+1),~j \in \N$ or 
any summable sequence of positive weights.
\end{definition}
\begin{definition}[Empirical Estimates of Distributional Distance]\label{defn:edd}
Consider a pair of process marginals $\mu, \mu' \in \M(\rho)$ given by \eqref{mrho} with corresponding sequence of random variables $\x$ and $\x'$ respectively, where 
$\mu:=\mu_i,~\mu':=\mu_j$, and $\x:=\langle X_{i,t} \rangle_{t \in \N},~\x':=\langle X_{j,t}\rangle_{t \in \N}$ correspond to rows  $i,j \in 1..m$  of $\bf{X}$ given by \eqref{eq:bfX}.
Empirical estimates of $d(\mu,\mu')$ are given by  
\begin{align}
&\widehat{d}_n(\x,\x'):=\sum_{u,v \in \N}w_{u}w_{v}\sum_{B \in \Delta_{u,v}}|\mu_n(\x,B)-\mu_{n}(\x',B)|\label{eq:dhat},\\
& \widehat{d}_n(\x,\mu):=\sum_{u,v \in \N}w_{u}w_{v}\sum_{B \in \Delta_{u,v}}|\mu_n(\x,B)-\mu(B)|\label{eq:dhat2}
\end{align}
where $\mu_n(\cdot,\cdot)$ is given by \eqref{eq:mun} and $w_j,~j \in \N$ is as in Definition~\ref{defn:dd}.
\end{definition}
\begin{remark}\label{rem1}
Note that \eqref{eq:dhat} can be efficiently calculated with computational complexity
$\O(n\polylog n)$ for $u_n:=\log n,~v_n:=-\log(s_{\min})$, where
$s_{\min}$ is the minimal non-zero difference between the union of all the elements of the two sequences $\x,\x'$,  see \cite{khaleghi2016consistent}.
\end{remark}
\begin{proposition}[\cite{khaleghi2016consistent}]
If the marginals in $\M(\rho)$ given by \eqref{mrho} are stationary ergodic, then for any $\mu \in \M(\rho)$ and any $s,~t \in 1..m$ it holds that,
\begin{align*}
&\lim_{n \rightarrow \infty}\widehat{d}_n(\x_s,\mu) = d(\mu_s,\mu)~\rho-\as,\\
&\lim_{n \rightarrow \infty}\widehat{d}_n(\x_s,\x_t) = d(\mu_s,\mu_t)~\rho-\as
\end{align*}
where  $\x_j:=\langle X_{j,t} \rangle_{t \in \N}$ correspond to $j^{\text{th}}$ row of $\bf{X}$ given by \eqref{eq:bfX} and $\mu_j \in \M(\rho)$,~$j=s,~t$. 
\end{proposition}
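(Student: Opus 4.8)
The plan is to reduce the claim to the ergodic convergence of the empirical frequency of each individual dyadic cube, and then to justify exchanging this limit with the double sum that defines the distributional distance. I will argue the second identity, the one involving two empirical measures; the first is obtained by the same argument upon replacing $\mu_n(\x',B)$ with the fixed number $\mu(B)$, which only simplifies the estimates.

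First I would appeal directly to the ergodicity of the marginals. By the definition of a stationary ergodic process, for each fixed pair $(u,v)\in\N^2$ and each cube $B\in\Delta_{u,v}$ one has $\mu_n(\x_s,B)\to\mu_s(B)$ and $\mu_n(\x_t,B)\to\mu_t(B)$, $\rho$-a.s. Since the index set $\{(u,v,B):u,v\in\N,\ B\in\Delta_{u,v}\}$ is countable, and a countable intersection of probability-one events still has probability one, there is a single event $\Omega_0$ with $\rho(\Omega_0)=1$ on which all of these convergences hold simultaneously. From here on I would fix a realization in $\Omega_0$ and reason pathwise.

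The crux is then the interchange of $\lim_n$ with $\sum_{u,v}$. Here I would use the elementary deterministic bound $\sum_{B\in\Delta_{u,v}}|\mu_n(\x_s,B)-\mu_n(\x_t,B)|\le 2$, which holds because the empirical measure assigns total mass at most one to the disjoint cubes of $\Delta_{u,v}$; the same bound holds in the limit for $\sum_{B}|\mu_s(B)-\mu_t(B)|$. Consequently every term of both the empirical and the limiting series is dominated by $2w_uw_v$, and $\sum_{u,v}2w_uw_v<\infty$ because the weights are summable. Given $\epsilon>0$, I would pick a finite set $F\subset\N^2$ whose complement contributes less than $\epsilon/3$ to $\sum_{u,v}2w_uw_v$. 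On the finite head $F$ the inner sums run over finitely many cubes and converge termwise by the previous step, so for all large $n$ the $F$-part of $\widehat d_n(\x_s,\x_t)$ is within $\epsilon/3$ of the $F$-part of $d(\mu_s,\mu_t)$. Combining this with the two tail bounds gives $|\widehat d_n(\x_s,\x_t)-d(\mu_s,\mu_t)|<\epsilon$ for all large $n$, establishing the limit on $\Omega_0$.

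The only genuinely delicate point is this exchange of the limit with the infinite summation; everything else follows from Birkhoff's theorem as encoded in the ergodicity assumption. The uniform domination by $2w_uw_v$ is what legitimizes the exchange, and it is available precisely because the inner sums are differences of sub-probability masses while the weights are summable---so that no quantitative rate for the convergence $\mu_n(\x,B)\to\mu(B)$ is ever required.
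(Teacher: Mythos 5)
Your proof is correct. Note that the paper itself offers no proof of this proposition---it is imported verbatim from \cite{khaleghi2016consistent}---and your argument is essentially the standard one used there (and in \cite{Ryabko:10clust}): almost-sure convergence of frequencies on each dyadic cube from the definition of ergodicity, a countable intersection of full-measure events to get a single good realization, and the uniform bound $2w_uw_v$ on each term together with summability of the weights to truncate the series and pass to the limit on a finite head. The only point worth making explicit is that the ergodicity hypothesis gives convergence $\mu_s$-a.s.\ for the marginal, and this lifts to a $\rho$-a.s.\ statement because the event in question depends only on the $s^{\text{th}}$ row of $\mathbf{X}$; this is a one-line observation, and with it your argument is complete.
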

\section{Problem formulation}\label{sec:prob}
\paragraph{Piecewise stationary ergodic processes.} 
In this paper, we shall be dealing with multiple samples of the form 
\begin{equation}\label{eq:pwss}
Y_1,\dots,Y_{\tau_1},Y_{\tau_1+1},\dots,Y_{\tau_2},\dots,\dots,Y_n,
\end{equation}
where the (stationary) segments $Y_{\tau_i},\dots,Y_{\tau_{i+1}}$ are generated by different, possibly dependent, stationary ergodic processes. To specify the distribution of the sample $\y$ we define a distribution over a matrix of random variables~\eqref{eq:bfX}, each row of which shall correspond to a stationary segment of the sample. The {\em stationary-segment distributions} of \eqref{eq:pwss} are then obtained by projecting the stationary segments onto the corresponding rows of the matrix. 
More formally, we specify a {\em Piecewise Stationary Ergodic Process} as follows. 
Consider a measure $\rho$ on $(\X^{\kappa \times \N},\B^{\otimes {\kappa+1}})$ for some fixed $\kappa \in \N$ with set of marginals $\M(\rho)=\{\mu_i,~i \in 1..\kappa+1\}$ as specified by \eqref{mrho}, 
where  $\mu_i \neq \mu_{i+1},~i \in 1..\kappa$ are stationary ergodic.
Fix some $n \in \N$ and a sequence $\boldsymbol{\tau}:= \langle \tau_i \rangle_{i \in 1..\kappa}$ with $\tau_1<\tau_2< \dots <\tau_{\kappa}\in 1..n$. 
Define the mapping $c: \N \rightarrow \N \times \N$ as 
\begin{equation}\label{defn:cmap}
c(j)\mapsto (t^*(j)+1,j-\tau_{t^*(j)})
\end{equation} 
where  $t^*(j):=\max_{i \in 0..\kappa+1} \tau_i \leq j$ picks out the closest change point $\tau_i$ to $j \in \N$, with the convention that $\tau_0:=0$ and $\tau_{\kappa+1}:=n$.
A {\em Piecewise Stationary Ergodic Sample} of the form \eqref{eq:pwss} generated by $(\rho, \boldsymbol{\tau})$ can be specified as a sequence of coordinate projections $Y_t: \X^{n} \rightarrow \X,~t \in 1..n$ such that for any $\ell \in 1..n,~t_1, \dots, t_\ell \in 1..n$ and $B_i \in \B_{\X},~i\in 1..\ell$ it holds that
\begin{equation}\label{eq:marginalmu}
    \Pr(Y_{t_1} \in B_1, \dots,Y_{t_\ell} \in B_\ell )= \rho(X_{c(t_1)} \in B_1, \dots,X_{c(t_\ell)} \in B_\ell).
\end{equation}
Observe that by \eqref{eq:marginalmu} the distribution of each segment $Y_{\tau_i+1..\tau_{i+1}}$ is given by a stationary ergodic process $\mu_{i},~i \in 1..\kappa+1$. Since it is assumed that $\mu_i \neq \mu_{i+1},~i \in 1..\kappa$, the indices $ \tau_i,~i \in 1..\kappa$ are referred to as {\em change points}. 
The pair $(\rho,  \boldsymbol{\tau})$ composed of the measure $\rho$ and its corresponding sequence of change points $ \boldsymbol{\tau}$ defines  
 a piecewise stationary ergodic process. 
\begin{definition}[Equivalence of piecewise stationary ergodic processes]\label{defn:equiv}
We consider a pair of piecewise stationary ergodic processes $(\rho,\boldsymbol{\tau})$ and $(\rho',\boldsymbol{\tau}')$ equivalent, if and only if they agree on their set of stationary-segment distributions, i.e., 
\begin{equation}\label{eq:equiv}
    (\rho,\boldsymbol{\tau}) \sim (\rho',\boldsymbol{\tau}') \Leftrightarrow \M({\rho}) = \M({\rho'}).
\end{equation} 
\end{definition}
Let $\mathcal P$ denote the set of all piecewise stationary ergodic processes. The equivalence relation defined above induces a partitioning of $\mathcal P$ into 
distinct classes $[(\rho,\boldsymbol{\tau})],~(\rho,\boldsymbol{\tau}) \in \mathcal P$
where 
\begin{equation*}
    [(\rho,\boldsymbol{\tau})]:=\big\{(\rho',\boldsymbol{\tau}') \in \mathcal P:(\rho',\boldsymbol{\tau}') \sim (\rho,\boldsymbol{\tau}) \big \}
\end{equation*}
so that two piecewise stationary ergodic processes belong to the same class, if and only if they are equivalent in the sense of \eqref{eq:equiv}.
We let 
\begin{equation}\label{eq:C}
    \mathcal C:=\{[(\rho,\boldsymbol{\tau})]:(\rho,\boldsymbol{\tau}) \in \mathcal P\}
\end{equation} denote the set of all such equivalence classes.
\paragraph{Clustering Problem.} 
Fix some $N \in \N$, which is the number of samples, and (unknown) sequence $\kappa_i \in \N,~i \in 1..N$ corresponding to the number of change points in each sample. Moreover, define (unknown) sequences 
$\boldsymbol{\theta}_i:=\langle \theta^{(i)}_j \rangle_{j \in 1..\kappa_i+1},~i \in 1..N$ with
$\theta^{(i)}_j \in (0,1)$.
For any $n \in \N$, define the change points $\boldsymbol{\tau}_i(n):=\langle  \tau^{i}_j(n)\rangle_{j \in 1..\kappa_i+1},~i \in \N$ where $\tau^{i}_j(n):=\lfloor n\theta_j^{(i)}\rfloor-\lfloor n\theta_{j-1}^{(i)}\rfloor $, with the convention that $\theta_{0}^{(i)} = 0$ for all $i  \in 1..N$. 
The problem is formulated as follows. For a fixed $n \in \N$, we are given a set 
\begin{equation}\label{eq:Nsamples}
\T(n):=\{\y_1,\dots,\y_N\}
\end{equation}
of $N$ piecewise stationary ergodic samples 
of the form \eqref{eq:pwss},  each generated by an unknown piecewise stationary ergodic process $(\rho_i,\boldsymbol{\tau}_i(n)),~i \in 1..N$. Thus, each sample is of length $n_i:=\lfloor n\theta_{\kappa_i+1}\rfloor$ and has $\kappa_i$ change points. It is assumed that each of $N$ piecewise stationary ergodic processes that generate the samples belongs to one of
$m$ distinct classes $C_1, \dots, C_m \in \C$, which are unknown.
Define the normalized minimum separation between the change points as 
\begin{equation}\label{eq:alpha}
\alpha:=\min_{i \in 1..N}\min_{k \in 1..\kappa_i+1}\theta_j^{(i)}-\theta_{j-1}^{(i)}. 
\end{equation}
We assume that $\alpha >0$ so that each stationary segment in $\y_i,~i \in 1..N$ is of length at least $n\alpha$.

\begin{definition}[Ground Truth Clustering]
Let $\G:=\{\G_1, \dots, \G_m\}$ be a partitioning of $1..N$  
where for any $i \in 1..N$, it holds that $i \in \G_{\ell}$ for some $\ell\in 1..m$ if and only if $(\rho_i,\boldsymbol{\tau}_i(n)) \in C_{\ell}$. We call $\G$ the ground-truth clustering.
\end{definition}
\noindent Thus, in the ground-truth clustering two samples fall into the same cluster if and only if the piecewise-stationary distributions that generate them are equivalent, in the sense that they have the same set of stationary distributions of the segments.

A clustering function $f$ takes a set $\T$ of samples along with the number $m$ of target clusters to produce a partition $f(\T,m)\mapsto\{J_1,\dots,J_m\}$ of $1..N$, aiming to recover the ground-truth $\G$. 
\begin{definition}[Consistency]
A clustering function $f$ is consistent for a
set of samples $\T=\T(n),~n \in \N$ if $f(\T, m) = \G$. Moreover, $f$ is called
asymptotically consistent if with probability $1$ it holds that 
\begin{equation*}
  \lim_{n \rightarrow \infty} f(\T(n),m) = \G.
\end{equation*}
\end{definition}

\paragraph{Joint distribution of piecewise stationary samples.}
Observe that the problem requires us to simultaneously consider  multiple samples, each generated by a piecewise stationary ergodic process.
These samples can themselves be dependent. Formally, this is defined through the following construction. 
Consider the space $\mathcal Y:= \X^{\kappa_1 \times\N} \times \dots \times  \X^{\kappa_N \times\N}$.  Denote by $\F:= \B_1 \otimes  \dots \otimes \B_N$ the product $\sigma$-algebra on $\Y$
where $\B_i:=\sigma(\{B_1 \times \dots \times B_{\kappa_i}: B_j \in \B\}),~i \in 1..N$ 
is in turn the product $\sigma$-algebra on $ \X^{\kappa_i \times \N}$.
Let $P$ be a probability measure on $(\Y, \F)$.
Consider a sequence ${\bf Z}:=\langle {\bf Y}_i \rangle_{i \in 1..N}$ of infinite matrices of  
$\X$-valued random variables
$Y_{s,t}^{(i)}:\X^{\kappa_i \times \N} \rightarrow \X,~s \in 1..\kappa_i,~t \in \N,i \in 1..N$, which can be easily shown to be $\F$-measurable. 
Suppose that $P$ is the distribution of ${\bf Z}$ so that $\Pr({\bf Z} \in F) = P(F)$ for all $F \in \F$. For each $i \in 1..N$, define the projection map $\widetilde{\pi}_i \mapsto \langle  Y_{s,t}^{(i)} \rangle_{s\in 1..k_i,~t \in \N}$. Then  $\rho_i:=P \circ \widetilde{\pi}_i^{-1},~i \in 1..N$ is the measure of ${\bf Y}_i$.
Our main probabilistic statements  will be stated in terms of $P$. 
\paragraph{The role of $n$.} The asymptotic results in this paper are all with respect to $n$ approaching infinity. While the formulation allows for all of the samples, and, of course, all of the stationary segments, to be of different lengths, $n$ parametrizes  (via the unknown sequences ${\bf \theta}_i$) all these lengths. Thus, when $n$ goes to infinity, the lengths of all the samples and all the individual segments within the samples also approach infinity. Note, however, that the clustering protocol is not ``online'', and thus the algorithm proposed below does not deal with sequences that grow over time, but only considers a fixed set of sequences of fixed lengths. The asymptotic results are thus to be interpreted as stating that if all the sequences are long enough then the algorithm is correct.

\section{Main Results}
In this section we outline the main results.
We start by introducing a distance between equivalence classes of piecewise stationary distributions.
Next, we show that this distance can be consistently estimated based on piecewise stationary samples. The distance estimates will then be used to construct an asymptotically consistent clustering algorithm. The proofs as well as auxiliary technical results are provided in Section~\ref{sec:proof}.
\begin{definition}[Distance between piecewise stationary ergodic classes] \label{defn:dp}
Let $C, C' \in \C$ be two classes of piecewise stationary ergodic processes where $C=[(\rho, \boldsymbol{\tau})]$ and $C'=[(\rho', \boldsymbol{\tau}')]$. 
We define a distance between $C,~C'$ as follows.
$$
\dd(C,C')=\max_{\mu \in \M(\rho)} \min_{\mu' \in \M(\rho')} d(\mu,\mu') + \max_{\mu' \in \M(\rho')} \min_{\mu \in \M(\rho)} d(\mu',\mu)
$$
\end{definition}
\begin{proposition}\label{prop-delta}
The distance $\dd$ induces a metric on the set of piecewise stationary ergodic equivalence classes.
\end{proposition}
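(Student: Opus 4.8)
The plan is to recognize $\dd$ as the symmetrization of a directed Hausdorff distance between the finite segment-distribution sets, computed with respect to the distributional distance $d$. For non-empty finite sets $P,Q$ of process distributions write $h(P,Q):=\max_{\mu\in P}\min_{\mu'\in Q}d(\mu,\mu')$, so that the definition reads $\dd(C,C')=h(\M(\rho),\M(\rho'))+h(\M(\rho'),\M(\rho))$ for $C=[(\rho,\boldsymbol{\tau})]$, $C'=[(\rho',\boldsymbol{\tau}')]$. Before checking the axioms I would record two preliminary facts. First, $\dd$ is \emph{well-defined} on $\C$: its value depends on the representatives only through the marginal sets $\M(\rho),\M(\rho')$, and by Definition~\ref{defn:equiv} any two representatives of the same class share the same marginal set; moreover each $\M(\rho)=\{\mu_1,\dots,\mu_{\kappa+1}\}$ is finite and non-empty, which guarantees that the inner minima and outer maxima are attained. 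Second, I would use that the distributional distance $d$ of Definition~\ref{defn:dd} is itself a metric on stationary ergodic process distributions: the triangle inequality follows termwise from $|\mu(B)-\mu''(B)|\le|\mu(B)-\mu'(B)|+|\mu'(B)-\mu''(B)|$ summed against the non-negative weights $w_uw_v$, symmetry is obvious, and $d(\mu,\mu')=0$ forces $\mu=\mu'$ because the dyadic cubes $\bigcup_{u,v}\Delta_{u,v}$ generate $\B$ and agreement of two measures on this generating system extends to all of $\B$ by uniqueness of measures (cf.\ \cite{Ryabko:10clust,khaleghi2016consistent}).

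Given these, non-negativity and symmetry of $\dd$ are immediate from $d\ge 0$ and from the manifestly symmetric form of the definition. For the identity of indiscernibles, if $C=C'$ then $\M(\rho)=\M(\rho')$ and for every $\mu$ the inner minimum is attained at $\mu'=\mu$ with $d(\mu,\mu)=0$, so both directed terms vanish and $\dd(C,C')=0$. Conversely, $\dd(C,C')=0$ forces $h(\M(\rho),\M(\rho'))=0$ and $h(\M(\rho'),\M(\rho))=0$, each being a sum of non-negative terms. The first says that for every $\mu\in\M(\rho)$ the quantity $\min_{\mu'\in\M(\rho')}d(\mu,\mu')$ vanishes; since $\M(\rho')$ is finite the minimum is attained, so there is $\mu'\in\M(\rho')$ with $d(\mu,\mu')=0$, whence $\mu=\mu'\in\M(\rho')$ by the separation property of $d$. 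Thus $\M(\rho)\subseteq\M(\rho')$, and symmetrically $\M(\rho')\subseteq\M(\rho)$, giving $\M(\rho)=\M(\rho')$ and hence $C=C'$ by Definition~\ref{defn:equiv}.

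The triangle inequality is the only step requiring a genuine argument, so I would isolate a directed lemma first: $h(P,R)\le h(P,Q)+h(Q,R)$ for finite non-empty $P,Q,R$. Fixing $\mu\in P$, for every $\nu\in Q$ and $\lambda\in R$ the triangle inequality for $d$ gives $d(\mu,\lambda)\le d(\mu,\nu)+d(\nu,\lambda)$; minimizing over $\lambda\in R$ yields $\min_{\lambda}d(\mu,\lambda)\le d(\mu,\nu)+h(Q,R)$, and then minimizing over $\nu\in Q$ gives $\min_{\lambda}d(\mu,\lambda)\le h(P,Q)+h(Q,R)$; taking the maximum over $\mu\in P$ proves the lemma. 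Applying it with $P=\M(\rho),\,Q=\M(\rho'),\,R=\M(\rho'')$ for classes $C,C',C''$, together with its mirror image obtained by reversing all roles, and adding the two bounds, I would obtain $h(P,R)+h(R,P)\le\big(h(P,Q)+h(Q,P)\big)+\big(h(Q,R)+h(R,Q)\big)$, i.e.\ $\dd(C,C'')\le\dd(C,C')+\dd(C',C'')$.

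The only substantive external input is that $d$ is a metric, in particular its separation property $d(\mu,\mu')=0\Rightarrow\mu=\mu'$, which I import from prior work; everything else is the routine verification that symmetrizing a directed Hausdorff distance over finite sets preserves the metric axioms, the single nontrivial combinatorial step being the directed triangle inequality above. I therefore expect no real obstacle beyond being careful that finiteness of the marginal sets is what makes all extrema attained and the separation argument valid.
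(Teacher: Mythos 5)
Your proposal is correct and takes essentially the same route as the paper's proof: the paper likewise treats positivity, symmetry and the triangle inequality as routine consequences of $d$ being a metric, and devotes its argument to the identity of indiscernibles via the two inclusions $\M(\rho)\subseteq\M(\rho')$ and $\M(\rho')\subseteq\M(\rho)$, exactly as you do. The only difference is thoroughness: you explicitly verify the directed-Hausdorff triangle inequality and the well-definedness of $\dd$ on equivalence classes, both of which the paper leaves implicit (``it is easy to verify\dots'').
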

We show that the distance $\delta$ can be estimated consistently.
The proposed approach to estimate $\delta$ is outlined in Algorithm~\ref{as}. This algorithm in turn relies on a 
procedure, introduced in \cite{khaleghi:12mce} that, given a sample $\y$ of the form \eqref{eq:pwss}
with $\kappa$ change points generated by a piecewise stationary process 
outputs an exhaustive list
of $K \geq \kappa$ candidate change-point estimates. While the algorithm from~\cite{khaleghi:12mce} does not attempt to estimate the number of change points $\kappa$, among the $K$ candidate change points that it outputs there are $\kappa$ consistent estimates of the unknown change points. 
It is worth noting that~\cite{khaleghi:12mce} establishes a much stronger property, namely, it sorts the list in such a way 
that  its first $\kappa$ elements estimate the true change points of $\y$. 
However, we shall not require this feature here: it is enough to have a list of {\em arbitrary order} that includes a correct estimate for each change point; for simplicity, we assume that the $K$ candidate change-points are sorted left-to-right. 
More precisely we have the following definition. 

\begin{definition}[List-estimator \cite{khaleghi:12mce}]\label{defn:list} 
A list-estimator is a 
function $\mathcal L$ that, given a sample of length $n \in \N$ with $\kappa \in \N$ change points 
generated by a piecewise stationary process $(\rho, \boldsymbol{\tau})$ 
produces a list 
$\psi_1 (n)  < \dots < \psi_{K}(n) \in \{1,\dots, n\}^{K}$ of some 
$K \geq \kappa$ candidate estimates. 
A list-estimator is said to be consistent if with probability $1$
there exists some $\mathcal I:=\{i^*_1,\dots,i^*_{\kappa}\}\subseteq 1..K$ such that 
\begin{align*}
  \lim_{n\rightarrow \infty}\max_{k \in 1..\kappa}\frac{1}{n}|\psi_{i^*_k}(n)-\tau_k|=0.  
\end{align*}
\end{definition}
\begin{algorithm}[!ht]
\caption{Calculating an empirical estimate of $\dd$}
\label{as}
\begin{algorithmic}[1]
\State 
{\bf INPUT}: $\y \in \X^{n_1},~\y' \in \X^{n_2}$, $~\lambda\in (0,1)$
\State{\bf \em Obtain a sequence of candidate change-point parameters in $\y$ and $\y'$ respectively, using the list-estimator from \cite{khaleghi:12mce}}
\begin{align}
    \boldsymbol{\widehat{\tau}}\gets \mathcal L(\y,\lambda) \quad \text{and}& \quad \boldsymbol{\widehat{\tau}'}\gets \mathcal L(\y',\lambda)  \label{eq:ups} 
\end{align}
\State{\bf \em Generate sets $\S$ and $\S'$ of consecutive stationary-segments corresponding to $\y$ and $\y'$}
\begin{align}
    \S &\gets \left  \{\overline{\y}_i:=\y_{\psi_{i-1}..\psi_i},~ i \in 1..|\boldsymbol{\widehat{\tau}}|+1: \langle \psi_i \rangle_{i \in 1..|\boldsymbol{\widehat{\tau}}|} =  \boldsymbol{\widehat{\tau}},~\psi_{0}:=1,~\psi_{|\boldsymbol{\widehat{\tau}}|+1}:=n_1 \right\}  \label{eq:S1} \\
    \S' &\gets \left  \{\overline{\y}'_i:=\y_{\psi'_{i-1}..\psi'_i},~i \in 1..|\boldsymbol{\widehat{\tau}'}|+1: \langle \psi'_i \rangle_{i \in 1..|\boldsymbol{\widehat{\tau}'}|} =  \boldsymbol{\widehat{\tau}'},~\psi'_{0}:=1,~\psi'_{|\boldsymbol{\widehat{\tau}'}|+1}:=n_2 \right\}   \label{eq:S'1}
\end{align}
\State {\bf \em Calculate an empirical estimate of the distance between the underlying distributions}
\begin{align*}
n &\gets \min\{\lambda n_1, \lambda n_2\}\\
\delta(\y,\y', \lambda) &\gets \max_{\overline{\y} \in \S}\min_{\overline{\y}' \in \S'} \widehat{d}_n(\overline{\y},\overline{\y}') + \max_{\overline{\y}' \in \S' } \min_{\overline{\y} \in \S} \widehat{d}_n(\overline{\y}',\overline{\y})
\end{align*}
\State {\bf OUTPUT}: $\delta(\y,\y', \lambda)$
\end{algorithmic}
\end{algorithm}
\begin{algorithm}[!ht]
\caption{Clustering piecewise stationary samples}
\label{alg:clust}
\begin{algorithmic}[1]
\State 
\bf {INPUT}: sequences $\T:=\{\y_1, \cdots, \y_N\}$, number $m$ of target clusters, parameter $\lambda$ 
\State{\bf \em Initialize $m$ points as cluster-centres}
\State{$c_1 \gets 1$}
\State{$C_1\gets\{c_1\}$}
\For{$\ell=2..m$}
\State $c_\ell\gets \min \{\argmax_{i=1..N} \displaystyle \min_{j=1..l-1} \delta(\y_i,\y_{c_{j}},\lambda)\}$, {\em where $\delta$ is given by Algorithm~\ref{as}} 
\State $C_\ell \gets \{c_\ell\}$ 
\EndFor
\State\bf{\em Assign the remaining points to appropriate clusters:}
\For {$i=1 .. N$}
\State $k \gets \argmin_{j\in \bigcup_{\ell=1}^m C_{\ell}}\delta(\y_i,\y_{j},\lambda)$
\State $C_\ell \gets C_\ell \cup \{i\}$
\EndFor
\State \bf {OUTPUT}: clusters $C_1, C_2, \cdots, C_{m}$
\end{algorithmic}
\end{algorithm}
\begin{proposition}[$\dd$ can be estimated consistently]\label{prop:const}
Consider the samples $\y,\y'$ generated by a distribution $P$ with piecewise stationary ergodic marginals $(\rho,\boldsymbol{\tau})$ and $(\rho',\boldsymbol{\tau'})$; the lengths of the samples are parametrized by $n$. 
Let  the estimate $\widehat{\dd}_n( \y,\y'):=\delta(\y,\y',\lambda)$ be obtained as the output of Algorithm~\ref{as} with $\y,~\y'$ and any $\lambda \in (0,\alpha]$ as input, where $\alpha$ is given by \eqref{eq:alpha}. Then
\begin{equation*}
    \lim_{n \rightarrow \infty}\widehat{\dd}_n( \y,\y') = \delta(C,C'),\quad P-\as
\end{equation*}
where, $C:=[(\rho,\boldsymbol{\tau})]$ and $C':= [(\rho,\boldsymbol{\tau}')]$ are the equivalence classes containing the piecewise stationary ergodic processes that generate $\y$ and $\y'$ respectively. 
\end{proposition}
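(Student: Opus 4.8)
The plan is to show that, on a probability-one event, the finite family of \emph{empirical} segment distributions constructed in Algorithm~\ref{as} converges to the true sets of stationary-segment distributions $\M(\rho)$ and $\M(\rho')$, tightly enough that the nested $\max$--$\min$ functional defining $\delta(\y,\y',\lambda)$ converges to the functional of Definition~\ref{defn:dp}. The two inputs I would rely on are the consistency of the list-estimator $\mathcal L$ (Definition~\ref{defn:list}), applied to both $\y$ and $\y'$, and the almost-sure convergence $\widehat{d}_n\to d$ of the empirical distributional distance on stationary ergodic marginals (the Proposition of \cite{khaleghi2016consistent} quoted above). Intersecting their finitely many probability-one events, I would work on a single event on which all the convergences below hold at once.

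First I would use the consistency of $\mathcal L$ to capture every true change-point $\tau_k$ by some candidate, up to normalized error $o(n)$. The structural consequence I want is that, since each $\tau_k$ lies within $o(n)$ of a candidate and every candidate is a boundary of the partition induced by $\boldsymbol{\widehat{\tau}}$, any true change-point lying in the interior of a candidate segment $\overline{\y}_i=\y_{\psi_{i-1}..\psi_i}$ must sit within $o(n)$ of $\psi_{i-1}$ or of $\psi_i$. I would then use $\alpha>0$ together with the hypothesis $\lambda\le\alpha$: the read length $n_{\mathrm{alg}}:=\min\{\lambda n_1,\lambda n_2\}\le\lambda n\le\alpha n$ is asymptotically no larger than the minimal true-segment length $n\alpha$. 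Consequently the first $n_{\mathrm{alg}}$ observations of $\overline{\y}_i$ are generated by a \emph{single} $\mu\in\M(\rho)$ apart from $o(n)$ observations near the ends of the read window. Since a contamination of vanishing relative length does not affect limiting frequencies, $\mu_{n_{\mathrm{alg}}}(\overline{\y}_i,B)\to\mu(B)$ for every $B$, so $\widehat{d}_{n_{\mathrm{alg}}}(\overline{\y}_i,\,\cdot\,)$ asymptotically agrees with $d(\mu,\,\cdot\,)$ by the quoted Proposition.

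Granting this \emph{asymptotic purity} of every candidate segment, I would then verify the two matching bounds that pin down the limit of each $\max$--$\min$ term. For the upper bound, each $\overline{\y}\in\S$ is asymptotically distributed as some $\mu\in\M(\rho)$, so its inner minimum $\min_{\overline{\y}'\in\S'}\widehat{d}_{n_{\mathrm{alg}}}(\overline{\y},\overline{\y}')$ tends to $\min_{\mu'\in\M(\rho')}d(\mu,\mu')\le\max_{\mu}\min_{\mu'}d(\mu,\mu')$. For the lower bound, take $\mu^\star$ attaining $\max_{\mu}\min_{\mu'}d(\mu,\mu')$; its true segment is bracketed by candidates estimating the two surrounding change-points, so some $\overline{\y}\in\S$ is asymptotically distributed as $\mu^\star$ and its inner minimum tends to $\min_{\mu'}d(\mu^\star,\mu')$. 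Because $\S'$ likewise represents every element of $\M(\rho')$, the two bounds coincide, giving convergence of $\max_{\overline{\y}\in\S}\min_{\overline{\y}'\in\S'}\widehat{d}_{n_{\mathrm{alg}}}(\overline{\y},\overline{\y}')$ to $\max_{\mu\in\M(\rho)}\min_{\mu'\in\M(\rho')}d(\mu,\mu')$. Handling the symmetric term identically and adding the two recovers $\delta(\y,\y',\lambda)\to\delta(C,C')$.

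I expect the main obstacle to be controlling the \emph{extra} candidates that $\mathcal L$ may emit (recall $K\ge\kappa$), which Definition~\ref{defn:list} leaves unconstrained: they subdivide true segments and, in the worst case, two candidates straddling a single true change-point could produce a short segment whose empirical distribution converges to no member of $\M(\rho)$ and which the bare consistency guarantee does not exclude. The heart of the argument is therefore to certify the asymptotic purity invoked above for \emph{all} candidate segments, not only the $\kappa$ aligned ones, so that no term of the outer $\max$ can be inflated by a non-convergent piece; this is where I would draw on the finer structure of the estimator of \cite{khaleghi:12mce}, namely the asymptotic separation of its candidates, which guarantees each candidate segment has length $\Theta(n)\ge n_{\mathrm{alg}}$, in combination with $\lambda\le\alpha$. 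The complementary fact, that every $\mu\in\M(\rho)$ remains represented after subdivision, is comparatively routine, since each true segment has length at least $n\alpha$, is cut into only $O(1)$ pieces, and thus contains at least one pure piece of length $\Theta(n)$. Once purity and representativeness are in place, convergence of the nested extrema follows from the finiteness of the two families and the uniform convergence of the pairwise estimates.
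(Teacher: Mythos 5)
Your proposal is correct and takes essentially the same approach as the paper's proof: establish that every candidate segment produced by Algorithm~\ref{as} is asymptotically ``pure'' (its empirical distribution converges to that of the stationary marginal generating its dominant portion) and that every element of $\M(\rho)$ and $\M(\rho')$ is represented among the candidate segments, then sandwich the nested $\max$--$\min$ estimates using finiteness of the segment families. The only difference is one of packaging: the paper imports the purity/convergence step as \cite[Lemma~2]{khaleghi14} (restated as the starred Lemma in Section~\ref{sec:proof}) and proves representativeness as Lemma~\ref{lem:main} --- whose proof uses exactly the consistency-plus-candidate-separation argument you outline for controlling spurious candidates --- whereas you sketch both ingredients directly from the list-estimator's guarantees.
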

Finally, the distance estimates given by Algorithm~\ref{as} can be used to construct a clustering algorithm.
The clustering algorithm, i.e. Algorithm~\ref{alg:clust}, starts by initializing the clusters using farthest-point initialisation, and then assigns the remaining samples to the nearest cluster. This is the same procedure as used in \cite{Ryabko:10clust,khaleghi2016consistent} to cluster stationary samples, but using a different distance, $\widehat{\dd}_n$. The initialisation procedure is due to \cite{Katsavounidis:94}.
\begin{theorem}[Algorithm~\ref{alg:clust} is asymptotically consistent]\label{thm:const}
Let $f(\T(n),m)=\algo\ref{alg:clust}(\T(n),m, \lambda)$ be the output of Algorithm~\ref{alg:clust} when provided with the set $\T(n)$ of piecewise stationary ergodic samples \eqref{eq:Nsamples}, along with the correct number $m$ of target clusters and some $\lambda \in (0,\alpha]$, where $\alpha$ is given by \eqref{eq:alpha}. It holds that,
\begin{equation*}
    \lim_{n \rightarrow \infty} f(\T(n),m) = \G, \quad P-\as
\end{equation*}
Moreover, the computational complexity of the algorithm is $\mathcal O(m N(n^2\polylog n + \lambda^{-2}n\polylog n))$. 
\end{theorem}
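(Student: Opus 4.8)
The plan is to decouple the probabilistic content, which is entirely contained in Proposition~\ref{prop:const}, from the deterministic combinatorial behaviour of Algorithm~\ref{alg:clust}. First I would fix the finitely many distinct classes $C_1,\dots,C_m\in\C$ occurring among the samples and set
\[
\dd_{\min}:=\min_{1\le k<l\le m}\dd(C_k,C_l).
\]
By Proposition~\ref{prop-delta} the map $\dd$ is a metric on $\C$, so $\dd(C_k,C_l)>0$ whenever $k\ne l$; as there are finitely many pairs, $\dd_{\min}>0$, while $\dd(C,C)=0$. Because $N$ is fixed there are only finitely many ordered pairs $(i,j)$, and Proposition~\ref{prop:const} gives $\lim_{n\to\infty}\widehat{\dd}_n(\y_i,\y_j)=\dd(C_{(i)},C_{(j)})$, $P$-a.s., for each pair, where $C_{(i)}$ denotes the class of the process generating $\y_i$. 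Intersecting these finitely many probability-$1$ events yields a single probability-$1$ event on which the convergence is simultaneous. Fixing $\varepsilon:=\dd_{\min}/3$, on this event there is a (sample-path dependent) $n_0$ such that for all $n\ge n_0$ and all pairs, $|\widehat{\dd}_n(\y_i,\y_j)-\dd(C_{(i)},C_{(j)})|<\varepsilon$. Hence for $n\ge n_0$ every within-cluster empirical distance is $<\varepsilon$ while every between-cluster empirical distance is $>\dd_{\min}-\varepsilon=2\varepsilon$.

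The crux is then to show that, under this $\varepsilon$-separation, the farthest-point initialisation selecting $c_1,\dots,c_m$ picks exactly one representative from each ground-truth cluster. I would argue by induction on $\ell$ that $c_1,\dots,c_\ell$ lie in $\ell$ pairwise-distinct clusters. The base case is immediate, since $c_1=1$ lies in some cluster. For the inductive step, any candidate $i$ whose cluster already contains a chosen centre $c_j$ satisfies $\min_{j'<\ell}\widehat{\dd}_n(\y_i,\y_{c_{j'}})<\varepsilon$, whereas any candidate in a not-yet-represented cluster satisfies $\min_{j'<\ell}\widehat{\dd}_n(\y_i,\y_{c_{j'}})>2\varepsilon$; since $\ell\le m$ and only $\ell-1<m$ clusters are represented, an unrepresented cluster exists and the $\argmax$ must land in it, extending the induction. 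After $m$ steps there is exactly one centre per cluster.

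Having established that the centres form a transversal of $\G$, the assignment phase is straightforward: for a sample $\y_i$ in cluster $\G_k$, the unique centre $c$ lying in $\G_k$ has $\widehat{\dd}_n(\y_i,\y_{c})<\varepsilon$, while every centre in another cluster is at empirical distance $>2\varepsilon$, so the $\argmin$ assigns $i$ to the centre of its own cluster. Thus for all $n\ge n_0$ the output equals $\G$, which is precisely the $P$-a.s. statement $\lim_{n\to\infty}f(\T(n),m)=\G$. I expect the main obstacle to be the bookkeeping in the induction — making the threshold separation uniform over all pairs simultaneously and checking that the deterministic tie-breaking ($\min\{\cdot\}$ in the $\argmax$) does not disturb the transversal property — rather than any genuinely new probabilistic difficulty, since the hard estimation step is supplied by Proposition~\ref{prop:const}.

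For the complexity bound I would count calls to Algorithm~\ref{as} together with the cost of each. The initialisation makes $\O(mN)$ distance evaluations and the assignment phase another $\O(mN)$, giving $\O(mN)$ calls in total. Each call runs the list-estimator of Definition~\ref{defn:list} on two samples, at cost $\O(n^2\polylog n)$ as established in~\cite{khaleghi:12mce}, and then forms the $\max$–$\min$ over segment pairs; with $\lambda$ bounding the number of candidate change points, hence of segments, by $\O(\lambda^{-1})$ per sample, this requires $\O(\lambda^{-2})$ evaluations of $\widehat{d}_n$, each costing $\O(n\polylog n)$ by Remark~\ref{rem1}, i.e. $\O(\lambda^{-2}n\polylog n)$. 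Summing gives $\O(n^2\polylog n+\lambda^{-2}n\polylog n)$ per call and $\O\big(mN(n^2\polylog n+\lambda^{-2}n\polylog n)\big)$ overall, as claimed.
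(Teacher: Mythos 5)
Your proposal is correct and follows essentially the same route as the paper: the paper's proof simply cites \cite[Theorem~1]{khaleghi2016consistent}, noting that strict separation of $\G$ under $\dd$ plus the consistency of $\widehat{\dd}_n$ (Proposition~\ref{prop:const}) yields consistency, and your argument is exactly that strict-separation/farthest-point-transversal argument written out in full. Your complexity accounting ($\O(mN)$ calls to Algorithm~\ref{as}, each costing $\O(n^2\polylog n)$ for the list-estimator plus $\O(\lambda^{-2})$ evaluations of $\widehat{d}_n$ at $\O(n\polylog n)$ each) matches the paper's.
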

\section{Proofs}\label{sec:proof}
In this section we prove our main results. 
We start by providing a simple argument showing that $\delta$ is indeed a metric on $\C$. Next, we prove Proposition~\ref{prop:const}. To this end, 
we first introduce a notation via Definition~\ref{defn:connected}
to identify the stationary-segment distribution which generates the {\em largest connected portion} of a sub-sequence of a piecewise stationary ergodic sample. The proof of the proposition relies on some technical results, namely, on \cite[Theorem~1]{khaleghi:12mce} and \cite[Lemma~2]{khaleghi14} which are stated for the sake completeness, as well as on Lemma~\ref{lem:main}, stated and shown below. 
\begin{proof}[Proof of Proposition~\ref{prop-delta}]
It is easy to verify that, since  $d(\cdot,\cdot)$ is a metric,   
$\dd$ is positive, symmetric and satisfies the triangle inequality. It remains to show that $\dd(C,C') = 0$ if and only if $C=C'$ for all $C,~C' \in \C$. Consider two piecewise stationary ergodic processes $(\rho, \boldsymbol{\tau})$ and  $(\rho', \boldsymbol{\tau}')$ in $\C$. 
Let $\M(\rho):=\{\mu_i:i \in 1..k\}$ and $\M(\rho'):=\{\mu'_i: i \in 1..k'\}$ denote the sets of distinct marginals corresponding to $\rho$ and $\rho'$ respectively. 
If $\dd(\C,\C')=0$ it must hold that 
$\max_{i \in 1..k} \min_{j \in 1..k'} d(\mu_i,\mu'_j) = 0$ and $\max_{i=1 ..k'} \min_{j \in 1..k} d(\mu'_i,\mu_j)=0 $, leading to 
$\M(\rho) \subseteq \M(\rho')$ and $\M(\rho') \subseteq \M_{\rho}$ respectively, so that $\M(\rho) = \M(\rho')$. 
On the other hand, suppose that $\M(\rho) = \M(\rho')$. If $\mu \in \M(\rho)$ then $\mu \in \M(\rho')$ and hence
$\min_{\mu' \in \M(\rho')}d(\mu,\mu') = 0$, leading to
$\max_{i \in 1..k} \min_{j \in 1..k'} d(\mu_i,\mu'_j) = 0$. Similarly we obtain $\max_{i\in 1..k'} \min_{j \in 1..k} d(\mu'_i,\mu_j)=0 $, so that $\dd(\rho,\rho')=0$, and the result follows. 
\end{proof}
\begin{definition}\label{defn:connected}
Consider a sample $\y:=\langle Y_t \rangle_{t \in 1..n},~n \in \N$ generated by a piecewise stationary ergodic process $(\rho,\boldsymbol{\tau})$.
For any $u<v \in 1..n$ fix a sub-sequence $\overline{\y}:= \langle Y_t \rangle_{t \in u..v}$ of $\y$. 
We define $\mu^*(\overline{\y}):=\mu_{i^*}$ where 
\begin{equation}\label{eq:istar}
    i^*:=\min\argmax_{k \in 1..|\boldsymbol{\tau}|}\left |\{j \in u..v: (\max_{i \in 0..|\boldsymbol{{\tau}}|}\tau_i \leq j) = k\} \right|
\end{equation}
where the $\min$-operator is used to consistently break ties.
\end{definition}
\begin{theorem*}
There exists an asymptotically consistent list-estimator
$\mathcal L$ that, given a piecewise stationary ergodic sample of length $n \in \N$ with $\kappa \in \N$ change points 
generated by a piecewise stationary process $(\rho, \boldsymbol{\tau})$ 
produces a list 
$\psi_1(n),\dots,\psi_{K}(n) \in \{1,\dots, n\}^{K}$ of some 
$K \geq \kappa$ candidate estimates, that are at least $\lambda n$ apart. 
Let $\psi_{[1]}(n)\leq \psi_{[2]}(n) \leq \dots \leq \psi_{[\kappa]}(n)$
be the first $\kappa$ elements of $\psi_j,~j \in 1..K$ sorted in increasing order of value. 
With probability $1$ it holds that
\begin{align*}
\lim_{n\rightarrow \infty}\max_{j=1..\kappa}\frac{1}{n}|\psi_{[j]}(n)-\tau_j| =0.
\end{align*}
\end{theorem*}

\begin{lemma*}
Let $\y$ be a piecewise stationary ergodic sample of length $n$ with $\kappa$ change points and at least $\alpha n$ apart for some $\alpha \in (0,1)$. 
Let $\S$ be the set of segments of the form \eqref{eq:S1}. 
For all $\lambda \in (0,\alpha]$ with probability $1$ we have
\begin{equation*}
\lim_{n\rightarrow \infty} \max_{\overline{\y} \in \S}\widehat{d}_n(\overline{\y},\mu^*(\overline{\y})) = 0.
\end{equation*}
\end{lemma*}
\begin{lemma}\label{lem:main}
Consider the set $\T$ of piecewise stationary samples specified by \eqref{eq:Nsamples}.  
For any $j  \in 1..|\T|$ suppose that $\y_j \in \T$ along with some $ \lambda \in (0,\alpha]$ is provided as input to Algorithm~\ref{as}, where $\alpha$ given by \eqref{eq:alpha} denotes the minimum normalized separation between the change points. With probability $1$ it holds that, for every $\epsilon > 0$ there exists some $N$ such that for all $n \geq N$ we have
\begin{equation*}
     \M_n(\rho_j) = \M(\rho_j)
\end{equation*}
where $\M_n(\rho_j):=\{\mu^*(\overline{\y}): \overline{\y} \in \S\}$,~ $\S$ is specified by \eqref{eq:S1} in Algorithm~\ref{as} and $\mu^*(\cdot)$ is given by Definition~\ref{defn:connected}.
\end{lemma}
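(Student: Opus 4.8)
The plan is to establish the set equality $\M_n(\rho_j)=\M(\rho_j)$ by proving the two inclusions separately, the forward one being immediate and the reverse one carrying the real content. For the inclusion $\M_n(\rho_j)\subseteq\M(\rho_j)$, I would simply observe that, by Definition~\ref{defn:connected}, for every candidate segment $\overline{\y}\in\S$ the distribution $\mu^*(\overline{\y})$ is by construction one of the true stationary-segment distributions $\mu_1,\dots,\mu_{\kappa_j+1}$ of the process generating $\y_j$; since $\M(\rho_j)=\{\mu_i: i\in 1..\kappa_j+1\}$, this inclusion holds for every $n$, with no probabilistic input. This is the easy half and requires no estimation guarantee. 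Note in particular that the definition of $\mu^*$ is stated in terms of the \emph{true} change points, so this lemma is a statement about change-point localisation and needs no appeal to the empirical distance (in contrast with the subsequent proposition).

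The substance is the reverse inclusion $\M(\rho_j)\subseteq\M_n(\rho_j)$: each true segment distribution $\mu_i$ must be recovered as $\mu^*(\overline{\y})$ for some candidate segment, once $n$ is large. Fix a true segment index $i$. The first tool is the consistency of the list-estimator (the quoted \cite[Theorem~1]{khaleghi:12mce}): with probability $1$, the candidate list $\boldsymbol{\widehat{\tau}}=\mathcal L(\y_j,\lambda)$ contains estimates $\widehat\psi_{(i-1)}$ and $\widehat\psi_{(i)}$ whose normalized distances to the true change points $\tau_{i-1}$ and $\tau_i$ tend to $0$, with the fixed endpoints $\psi_0:=1$ and $\psi_{|\boldsymbol{\widehat{\tau}}|+1}:=n_j$ serving as exact estimates of $\tau_0=0$ and $\tau_{\kappa_j+1}=n_j$ for the two boundary segments. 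Because $\tau_i-\tau_{i-1}\geq\alpha n$ by \eqref{eq:alpha}, for $n$ large these two estimates are distinct and correctly ordered, so at least one candidate segment $\overline{\y}=\y_{\psi_c..\psi_{c+1}}$ lies inside the interval $[\widehat\psi_{(i-1)},\widehat\psi_{(i)}]$.

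Next I would quantify how such an inner candidate segment is generated. The interval $[\widehat\psi_{(i-1)},\widehat\psi_{(i)}]$ coincides with the true segment $[\tau_{i-1},\tau_i]$ up to $o(n)$ indices at each end, so every candidate segment it contains has all but $o(n)$ of its indices labelled by the true segment $i$, while the adjacent segments $i-1$ and $i+1$ can contribute only the $o(n)$ boundary overshoot. Since consecutive candidate change points are at least $\lambda n$ apart, such a candidate segment carries at least $\lambda n - o(n)=\Theta(n)$ indices generated by $\mu_i$. Hence the count appearing in \eqref{eq:istar} for label $i$ is $\Theta(n)$ while the count for any other label is $o(n)$, so for $n$ large the $\argmax$ is the singleton $\{i\}$, the tie-breaking $\min$ being irrelevant, and $\mu^*(\overline{\y})=\mu_i$. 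Thus $\mu_i\in\M_n(\rho_j)$. Taking the union over the finitely many indices $i\in 1..\kappa_j+1$, on which the almost-sure convergence of the list-estimator holds simultaneously, yields a single threshold $N$ beyond which both inclusions hold, giving $\M_n(\rho_j)=\M(\rho_j)$ with probability $1$ (the quantifier over $\epsilon$ in the statement is then vacuous, as the equality is exact).

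I expect the main obstacle to be the careful bookkeeping in the third step: one must control the $o(n)$ boundary overshoot uniformly and combine it with the $\lambda n$-separation lower bound to guarantee that every inner candidate segment is genuinely \emph{dominated} by $\mu_i$, rather than merely containing some $\mu_i$ mass. Subtleties to handle include the boundary segments $i=1$ and $i=\kappa_j+1$, where one endpoint is fixed and exact; the possibility of spurious candidate change points strictly inside $[\widehat\psi_{(i-1)},\widehat\psi_{(i)}]$, which only helps since each resulting inner segment still retains $\Theta(n)$ mass from $\mu_i$; and the correct, order-preserving identification of which candidate estimates track which true change points for $n$ large.
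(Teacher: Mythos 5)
Your proposal is correct and follows essentially the same route as the paper: the inclusion $\M_n(\rho_j)\subseteq\M(\rho_j)$ is noted as immediate, and the reverse inclusion is derived from the consistency and $\lambda n$-separation guarantees of the list-estimator of \cite{khaleghi:12mce}, which place a candidate segment of length at least $\lambda n$ inside each true segment up to $o(n)$ overshoot, so that its dominant label is the true one and $\mu^*$ recovers $\mu_i$. The only difference is a streamlining of one step: where you locate such a segment directly by ordering (the candidate immediately following the consistent estimate of $\tau_{i-1}$ cannot pass the consistent estimate of $\tau_i$, since both are members of the sorted candidate list), the paper establishes the same nesting $[\psi_{i^*_k},\psi_{i^*_k+1}]\subseteq[\tau_k-\epsilon n,\tau_{k+1}+\epsilon n]$ via a case analysis on whether $\psi_{i^*_k+1}$ is a consistent or spurious candidate, together with a short contradiction argument.
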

\begin{proof}
For simplicity of notation let $\y:=\y_j$,~$(\rho,\boldsymbol{\tau}(n)) = (\rho_j,\boldsymbol{\tau}_j(n)),~n \in \N$ and $\kappa:=\kappa_j$.
First observe that by definition
$\M_n(\rho) \subseteq \M(\rho)$ for all $n \in \N$.
Therefore, to prove the statement, it suffices to show that for large enough $n$ we have $\M(\rho) \subseteq \M_n(\rho)$.
Fix $\epsilon \in (0,\lambda/2)$. 
By \cite[Theorem~1]{khaleghi:12mce}, there exists some $N$ such that for all $n \geq N$ the list of candidate change-points produced by $\mathcal L$ is at least $\kappa$ long and includes a subset of size $\kappa$ composed of consistent change point estimates. That is, there exists some $\mathcal I:=\{i^*_1,\dots,i^*_{\kappa}\}\subseteq 1..|\boldsymbol{\widehat{\tau}}(n)|$ such that 
\begin{align}\label{lem1:const}
&\max_{k \in 1..\kappa}\frac{1}{n}|\psi_{i^*_k}-\tau_k|  \leq \epsilon\\
&\min_{i\in 1..|\boldsymbol{\widehat{\tau}}(n)|+1}\psi_i-\psi_{i-1}\geq n\lambda \label{prop0:eq:linlen}
\end{align}
where  $\psi_0:=0$ and  $\psi_{|\boldsymbol{\widehat{\tau}}(n)|+1}:=n$. Note that \eqref{prop0:eq:linlen} implies that the candidate estimates obtained by $\mathcal L$ are at least $n\lambda$ apart. 
We show that for every $\nu \in \M(\rho)$ there exists some $\overline{\y} \in\S$ such that 
$\nu=\mu^*(\overline{\y})$ so that $\nu \in \M_n(\rho)$ for all $n \geq N$. 
Take some $\nu \in \M(\rho)$. 
By construction, there exists some $k \in 0..\kappa$ such that 
$\y_{\tau_k+1..\tau_{k+1}}$ is generated by $\nu$. 
By \eqref{lem1:const} there exists some $i^*_k \in \mathcal I$ such that  
$\frac{1}{n}|\psi_{i^*_k}-\tau_k|  \leq \epsilon$. 
We show that $[\psi_{i^*_k},\psi_{i^*_k+1}] \subseteq [\tau_k-\epsilon n,\tau_{k+1}+\epsilon n]$. We have two cases: either $i^*_k+1 \in \mathcal I$, in which case by  \eqref{lem1:const} 
it holds that $\frac{1}{n}|\psi_{i^*_k+1}-\tau_{k+1}|  \leq \epsilon$ for all $n \geq N$, or 
$i^*_k+1 \in \mathcal I^c:=\{1,\dots, |\boldsymbol{\widehat{\tau}}(n)|\}\setminus \mathcal I$. 
We argue that in the latter case $\psi_{i^*_k+1} < \tau_{k+1}$. 
To see this, assume by way of contradiction that $\psi_{i^*_k+1} > \tau_{k+1}$
where $\tau_{k+1} \neq n$; note that the statement trivially holds for $\tau_{k+1}=n$. 
By the consistency of $\mathcal L(\y,\lambda)$ there exists some $j > i^*_k \in \mathcal I$
such that $$\frac{1}{n}|\psi_j-\tau_{k+1}|\leq \epsilon$$ for all $n \geq N$. 
Moreover, by \eqref{lem1:const} and \eqref{prop0:eq:linlen} for all $n \geq N$ 
the candidates indexed by the elements of $\I^c $ are linearly separated
from the true change points, that is,
\begin{align}\label{prop:eq:mingap}
&\min_{\substack{k\in1..\kappa\\ i\in \mathcal I^c}}|\tau_k-\psi_i|  \geq \min_{\substack{k\in1..\kappa \\ i\in \I^c,j\in \I}}|\psi_i-\psi_j|-|\tau_k-\psi_j| \geq n (\lambda-\epsilon).
\end{align}
Thus, from \eqref{lem1:const}  and \eqref{prop:eq:mingap} 
we obtain that $\psi_{i^*_k}-\psi_j \geq \lambda-2\epsilon >0$. 
Since by construction $\psi_i,~i \in 0..|\boldsymbol{\widehat{\tau}}(n)|+1$ are sorted in increasing order of value, this leads to a contradiction. Therefore, we have 
$[\psi_{i^*_k},\psi_{i^*_k+1}] \subseteq [\tau_k-\epsilon n,\tau_{k+1}+\epsilon n]$. 
It follows that $\mu^*(\overline{\y})=\nu$, where $\overline{\y} = \y_{\psi_{i^*_k}..\psi_{i^*_k+1}}$. 
This implies that $\nu \in \M_n(\rho)$, and the statement follows. 
\end{proof}
\begin{proof}[Proof of Proposition~\ref{prop:const}]
Let
\begin{equation}\label{eq:deltamin}
    \delta_{\min}:=\min_{\substack{\mu \neq \mu' \\ \mu \in \M(\rho),~ \mu' \in \M(\rho')}}d(\mu,\mu')
\end{equation} 
denote the minimum non-zero distributional distance between the stationary-segment distributions in $\M(\rho)$ and $\M(\rho')$. Fix some $\epsilon \in (0,\delta_{\min}/3)$ . 
Let $\S$ and $\S'$, respectively specified by \eqref{eq:S1} and \eqref{eq:S'1}, be the stationary segments identified by Algorithm~\ref{as} in each sample. 
Note that by construction the maximum number of segments produced by $\mathcal L(\cdot,\lambda)$ is $1+\lambda^{-1}$ 
so that with $\lambda \leq \alpha$ we have 
\begin{equation}\label{eq:maxss}
\max\{|\S|,|\S'| \}\leq 1+\alpha^{-1}.
\end{equation}
where $\alpha$ given by \eqref{eq:alpha} is the minimum normalized distance between the change points. 
Moreover, note that $n_i = n\theta^{(i)}_{\kappa_{i+1}}$ and $n_j= n\theta^{(j)}_{\kappa_{j+1}}$ where $\theta_{\kappa_{i+1}}^{(i)},~\theta^{(j)}_{\kappa_{j+1}} \in (0,1)$, we have $\max\{n_i,n_j\} \leq n$ so that for any $\lambda \in (0,\alpha]$ it holds that 
\begin{equation}\label{eq:goodl}
    \max\{n_i,n_j\}\lambda \leq n \alpha \leq n\theta_{\min}
\end{equation}  
where $\theta_{\min}:=\min\{\theta_k^{(i)},~\theta_{k'}^{(j)},~k \in 1..\kappa_{i+1},~k' \in 1..\kappa'_{j+1}\}$ corresponds to the minimum length of the stationary segments in $\y$ and $\y'$.
By \eqref{eq:goodl} the conditions for \cite[Lemma~2]{khaleghi14} hold so that by this lemma there exists some $N_0 \in \N$ such that for all $n \geq N_0$ we have
\begin{align}\label{eq:je}
   \max\left \{\max_{\overline{\y} \in \S}\widehat{d}_n(\overline{\y},\mu^*(\overline{\y})), 
   \max_{\overline{\y}' \in \S'}\widehat{d}_n(\overline{\y}',\mu^*(\overline{\y}'))  \right \} \leq \epsilon
\end{align} 
Moreover, by Lemma~\ref{lem:main} there exists some $N_1$ such that for all $n \geq N_1$ it holds that 
\begin{align}\label{eq:mainlem}
    \M_n(\rho) = \M(\rho)\quad\text{and}\quad \M_n(\rho') = \M(\rho').
\end{align}
To prove the statement we proceed as follows. 
Take $n \geq \max \{N_0,N_1\}$. 
We have,
\begin{align}
\max_{\overline{\y} \in \S}\min_{\overline{\y}' \in \S'} \widehat{d}_n(\overline{\y},\overline{\y}') 
&\leq \max_{\overline{\y} \in \S}\min_{\overline{\y}' \in \S'} \widehat{d}_n(\overline{\y},\mu^*(\overline{\y}')) + \widehat{d}_n(\mu^*(\overline{\y}'),\overline{\y}')\label{eq:suf1}\\
&\leq \max_{\overline{\y} \in \S}\min_{\overline{\y}' \in \S'} d(\mu^*(\overline{\y}),\mu^*(\overline{\y}')) + \widehat{d}_n(\overline{\y},\mu^*(\overline{\y})) + \widehat{d}_n(\mu^*(\overline{\y}'),\overline{\y}')\label{eq:suf2}\\
& \leq  \max_{\overline{\y} \in \S}\min_{\overline{\y}' \in \S'} d(\mu^*(\overline{\y}),\mu^*(\overline{\y}'))  + 2\epsilon\label{eq:suf3}\\
& = \max_{\mu \in \M_n(\rho)} \min_{\mu' \in \M_n(\rho')}d(\mu,\mu')+2\epsilon\label{eq:suf4}\\
&=\max_{\mu \in \M(\rho)} \min_{\mu' \in \M(\rho')}d(\mu,\mu')+2\epsilon\label{eq:suf5}
\end{align}
where \eqref{eq:suf1} and \eqref{eq:suf2} follow from the application of triangle inequality, \eqref{eq:suf3} follows from \eqref{eq:je}, and \eqref{eq:suf4} follows  from the definition of $\M_n$ and \eqref{eq:suf5} follows from\eqref{eq:mainlem}. On the other hand, 
\begin{align}
\max_{\overline{\y} \in \S} \min_{\overline{\y}' \in \S'}\widehat{d}_n(\overline{\y}, \overline{\y}')
&\geq \max_{\overline{\y} \in \S} \min_{\overline{\y}' \in \S'}\widehat{d}_n(\overline{\y},\mu^*(\overline{\y}'))-\widehat{d}_n(\overline{\y}', \mu^*(\overline{\y}')) \label{eq:nec2}\\
&\geq \max_{\overline{\y} \in \S}  \min_{\overline{\y}' \in \S'}d(\mu^*(\overline{\y}'), \mu^*(\overline{\y}))-\widehat{d}_n(\overline{\y},\mu^*(\overline{\y}))-\widehat{d}_n(\overline{\y}', \mu^*(\overline{\y}'))  \label{eq:nec3}\\
& \geq \max_{\overline{\y}' \in \S'} \min_{\overline{\y} \in \S}d(\mu^*(\overline{\y}), \mu^*(\overline{\y}'))-2\epsilon \label{eq:nec4}\\
&=\max_{\mu \in \M_n(\rho)}\min_{\mu' \in \M_n(\rho')}d(\mu,\mu')-2\epsilon\label{eq:nec5}\\
&=\max_{\mu \in \M(\rho)}\min_{\mu' \in \M(\rho')}d(\mu,\mu')-2\epsilon \label{eq:nec6}
\end{align}
where \eqref{eq:nec2} and \eqref{eq:nec3} follow from the triangle inequality, \eqref{eq:nec4} follows from \eqref{eq:je}, \eqref{eq:nec5} follows from the definition of $\M_n$ and \eqref{eq:nec6} follows from \eqref{eq:mainlem}. 
By \eqref{eq:suf4} and \eqref{eq:nec6} be obtain 
\begin{equation}\label{eq:first}
\left |\max_{\overline{\y} \in \S} \min_{\overline{\y}' \in \S'}\widehat{d}_n(\overline{\y}, \overline{\y}') - \max_{\mu \in \M(\rho)}\min_{\mu' \in \M(\rho')}d(\mu,\mu')\right | \leq 2\epsilon.
\end{equation}
Similarly, we have 
\begin{equation}\label{eq:second}
\left |\max_{\overline{\y}' \in \S'} \min_{\overline{\y} \in \S}\widehat{d}_n(\overline{\y}', \overline{\y}) - \max_{\mu' \in \M(\rho')}\min_{\mu \in \M(\rho)}d(\mu',\mu)\right | \leq 2\epsilon.
\end{equation}

\noindent From \eqref{eq:first} and \eqref{eq:second} it follows that 
$|\delta(\y,\y', \lambda)-\delta(C,C')| \leq 4\epsilon$. 
Since the choice of $\epsilon$ is arbitrary, the statement follows. 
\end{proof}
\begin{proof}[Proof of Theorem~\ref{thm:const}]
The proof can be recovered from that of \cite[Theorem~1]{khaleghi2016consistent} with the distributional distance $d$ replaced by $\dd$, since the algorithms only differ in this choice of distance. The consistency of the algorithm then follows from the consistency of $\widehat{\dd}_n$, namely, Proposition~\ref{prop:const}. 
For the sake of completeness, let us recall the main argument from the proof of \cite[Theorem~1]{khaleghi2016consistent}.
The ground-truth clustering $\mathcal G$ has the so-called strict separation property with respect to the distance $\dd$, meaning that any two distributions in the same cluster are closer to each other than to those in any other cluster: the $\dd$ distance is 0 if and only if the distributions are the same and thus in the same cluster. From the consistency of $\widehat{\dd}_n$ it follows that the same holds $P$-\as from some $n$ on for the $\widehat{\dd}_n$ distance. 
To  calculate the computational complexity of the algorithm,  first note that by \cite[Remark~2]{khaleghi:12mce} the computational complexity of the list-estimator is $\mathcal O(n^2\polylog n)$, and that of the distributional distance $\widehat{d}$ is $\mathcal O(n\polylog n)$. Moreover, to obtain an estimate of $\widehat\dd_n$, the algorithm needs two calls to the list estimator as well as an additional $1/\lambda^2$ (corresponding the maximal number of estimated segments in each sample) calculations of $\widehat d$. Given that the total number of calculations of $\widehat\dd_n$ of the algorithm is $mN$, this brings the overall computational complexity to $\mathcal O(m N(n^2\polylog n + \lambda^{-2}n\polylog n))$.
\end{proof}
\section{Conclusion}
In this paper, we have introduced a novel probabilistic framework for 
clustering time series, which is considerably more general than 
previously used models. At the same time, as shown here, it allows for provably 
consistent efficient algorithms.
In this section we analyse the 
conditions of the main theorem, and briefly outline some possible extensions and 
generalizations of the results obtained.
\paragraph{Necessity of the conditions.}
In Theorem~\ref{thm:const} it is required that the correct number of clusters 
$m$, as well as a lower-bound on the minimal distance 
between change points, be provided. The former requirement is necessary. Indeed, as 
shown in \cite{Ryabko:10discr} (see also \cite{Ryabko:19c}), for 
stationary ergodic distributions there is no asymptotically consistent 
algorithm that, given two samples, would answer whether they were 
generated by the same process or by different ones.
As a corollary, without knowing the number of clusters, it is, in 
general, impossible to cluster even two samples generated by 
(single-piece and thus, of course, also piecewise) stationary 
distributions. As for the latter requirement, the lower-bound $\lambda$ 
on the minimal distance between change points, comes from the 
corresponding requirement of the change-point estimation algorithm of 
\cite{khaleghi14}. Currently, it is not known whether this requirement 
is necessary, although we would conjecture that it is.  In contrast, if 
the number of change points  is known, then these change points can be 
estimated consistently without $\lambda \in (0,\alpha]$ given, as is 
established in \cite{Khaleghi:15chp}. While of lesser practical 
interest, an analogous consistency result can be established for the 
clustering problem considered here. To do so, Algorithm~\ref{as} would 
obtain the list of change point estimates in each sample by applying 
the algorithm of  \cite{Khaleghi:15chp}, and then proceed without 
further modifications. Thus, the requirement of a known lower-bound on the 
minimum distance between change points can be traded for the requirement 
of a known number of change points in each sample.
\paragraph{Extensions and generalizations.}
The time series studied in this paper are assumed to be generated by 
finitely many piecewise stationary distributions, with changes in 
distribution being abrupt. One can first ask whether stationary 
distributions can be generalized; a related question is whether the 
transitions can be made gradual. Both questions are answered by 
considering so-called {\em asymptotic mean stationary} distributions. Essentially, 
these are processes such that all frequencies converge almost surely; 
see \cite{Gray:88} for a formal definition and results. Here, it is worth 
mentioning that, since all that we use is asymptotic convergence of 
frequencies, all the results can be directly extended to this case; this 
includes the corresponding results on change-point estimation, see 
\cite{Ryabko:19c}. What this means is that the changes between 
distributions do not have to be abrupt, as long as they happen over 
$o(n)$ time-steps.
A different generalization could be achieved by considering the online setting of the 
problem. This means allowing the samples to grow with time, as well as 
adding new samples potentially at every time step. Such a setting allows 
one to accommodate a range of new applications that deal with growing 
bodies of data. The corresponding problem for stationary time series is 
addressed in \cite{khaleghi2016consistent}. Piecewise stationary 
samples present new challenges in this respect. Specifically, we would 
need to allow for an infinite number of change points. More importantly, 
it would become possible that two samples are generated by equivalent 
piecewise stationary distributions, but not all of the distributions of 
the samples have been revealed in each of the segments, and so the 
distributions look differently. A detailed investigation of what is 
possible to achieve in this setting is left for future work.
\paragraph{Finite-time guarantees.} In the framework of stationary 
(ergodic) time series, fundamental results establish the impossibility 
of obtaining any finite-time guarantees on the error of the resulting 
algorithms; indeed, already the speed of convergence of frequencies may 
be arbitrary slow \cite{Shields:96}. Therefore, additional assumptions that 
go beyond stationarity and ergodicity are necessary  if one wishes to  
obtain any finite-time guarantees on the performance. While this falls 
out of scope of the present paper, it would not be without interest to 
consider clustering piecewise stationary mixing or even i.i.d.\ time 
series: to the best of our knowledge, these questions would still be 
open. Making such assumptions would also lay grounds for the possibility of 
constructing consistent algorithms that do not require the knowledge of 
the correct number of clusters. The stationary mixing case of this 
problem is briefly considered in \cite{khaleghi2016consistent}.


\begin{thebibliography}{10}
\bibitem{Balcan:08}
M.~Balcan, A.~Blum, and S.~Vempala.
\newblock A discriminative framework for clustering via similarity functions.
\newblock In {\em Proceedings of the 40th annual ACM symposium on Theory of
  computing}, pages 671--680. ACM, 2008.

\bibitem{Gray:88}
R.~Gray.
\newblock {\em Probability, Random Processes, and Ergodic Properties}.
\newblock Springer Verlag, 1988.

\bibitem{Gyorgy:12}
A.~Gyorgy, T.~Linder, and G.~Lugosi.
\newblock Efficient tracking of large classes of experts.
\newblock {\em IEEE Transactions on Information Theory}, 58(11):6709--6725,
  2012.

\bibitem{Katsavounidis:94}
I.~Katsavounidis, C.-C.~J. Kuo, and Z.~Zhang.
\newblock A new initialization technique for generalized {L}loyd iteration.
\newblock {\em IEEE Signal Processing Letters}, 1:144--146, 1994.

\bibitem{khaleghi:12mce}
A.~Khaleghi and D.~Ryabko.
\newblock Locating changes in highly dependent data with unknown number of
  change points.
\newblock In {\em Proceedings of the 25th International Conference on Neural Information Processing Systems}, pages
  3095--3103, 2012.

\bibitem{khaleghi14}
A.~Khaleghi and D.~Ryabko.
\newblock Asymptotically consistent estimation of the number of change points
  in highly dependent time series.
\newblock In {\em Proceedings of the 31st International Conference on Machine
  Learning}, volume~32, pages 539--547, 22--24 Jun 2014.

\bibitem{Khaleghi:15chp}
A.~Khaleghi and D.~Ryabko.
\newblock Nonparametric multiple change point estimation in highly dependent
  time series.
\newblock {\em Theoretical Computer Science}, 620:119--133, 2016.

\bibitem{khaleghi2016consistent}
A.~Khaleghi, D.~Ryabko, J.~Mary, and P.~Preux.
\newblock Consistent algorithms for clustering time series.
\newblock {\em The Journal of Machine Learning Research}, 17(1):94--125, 2016.

\bibitem{Ryabko:10clust}
D.~Ryabko.
\newblock Clustering processes.
\newblock In {\em Proceedings of the 27th International Conference on Machine Learning},
  pages 919--926, 2010.

\bibitem{Ryabko:10discr}
D.~Ryabko.
\newblock Discrimination between {B}-processes is impossible.
\newblock {\em Journal of Theoretical Probability}, 23(2):565--575, 2010.

\bibitem{Ryabko:19c}
D.~Ryabko.
\newblock {\em Asymptotic Nonparametric Statistical Analysis of Stationary Time
  Series}.
\newblock Springer, 2019.

\bibitem{Shields:96}
P.~Shields.
\newblock {\em The Ergodic Theory of Discrete Sample Paths}.
\newblock AMS Bookstore, 1996.

\bibitem{Willems:96}
F.~M. Willems.
\newblock Coding for a binary independent piecewise-identically-distributed
  source.
\newblock {\em IEEE Transactions on Information Theory}, 42(6):2210--2217,
  1996.
\end{thebibliography}
\end{document}